\pgfplotsset{compat=1.18}
\newtheorem{thm}{Theorem}
\newtheorem{lem}[thm]{Lemma}
\newtheorem{defn}[thm]{Definition}
\newtheorem{ex}[thm]{Example}
\title{The VOROS: Lifting ROC Curves to 3D to Summarize Unbalanced Classifier Performance}
\author{
    %Authors
    % All authors must be in the same font size and format.
    Christopher Ratigan\textsuperscript{\rm 1},  Lenore Cowen\textsuperscript{\rm 1, 2}
}
\begin{document}

\maketitle

\title{The VOROS: Lifting ROC curves to 3D to summarize unbalanced classifier performance\thanks{Submitted to the editors DATE.
\funding{This work was funded by tba}}}

% Authors: full names plus addresses.
%\author{Christopher Ratigan\thanks{Department of Mathematics, Tufts University }
%(\email{christohper.ratigan@tufts.edu})
%\and 
%Lenore J. Cowen\thanks{Department of Computer Science, Tufts University, Medford, MA }
%  (\email{cowen@cs.tufts.edu})}

%%% Local Variables: 
%%% mode:latex
%%% TeX-master: "ex_article"
%%% End: 

%\begin{document}

% Optional PDF information
%\ifpdf
%\hypersetup{
%  pdftitle={The VOROS: Lifting ROC curves to 3D to better explain performance of unbalanced classifiers},
%  pdfauthor={Christopher Ratigan, Lenore Cowen}
%}
%\fi

% The next statement enables references to information in the
% supplement. See the xr-hyperref package for details.

%\externaldocument[][nocite]{ex_supplement}

% FundRef data to be entered by SIAM
%<funding-group specific-use="FundRef">
%<award-group>
%<funding-source>
%<named-content content-type="funder-name"> 
%</named-content> 
%<named-content content-type="funder-identifier"> 
%</named-content>
%</funding-source>
%<award-id> </award-id>
%</award-group>
%</funding-group>

%\maketitle

% REQUIRED
\begin{abstract}

While the area under the ROC curve is perhaps the most common measure that is  used to rank the relative performance of different binary classifiers, longstanding field folklore has noted that it can be a  measure that ill-captures the benefits of different classifiers when either the actual class values or misclassification costs are highly unbalanced between the two  classes. We introduce a new ROC surface, and the VOROS, a volume over this ROC surface, as a natural way to capture these costs, by lifting the  ROC curve to 3D. Compared to previous attempts to generalize the  ROC curve, our formulation also provides a simple and intuitive way to  model the scenario when only ranges, rather than exact values, are known for possible class imbalance and misclassification costs.  

\end{abstract}

\begin{links}
    \link{Code}{https://github.com/ChristopherRatigan/VOROS}
\end{links}

\section{Introduction}

The ROC curve, constructed by plotting the False Positive Rate (FPR) against the True Positive Rate (TPR), is the most canonical way to measure the performance of a binary classifier.

 \begin{defn}
    Given an ROC curve $y=f(x)$, the area under the ROC curve, henceforth AUROC, is defined by

    $$\int_0^1 f(x)dx$$
\end{defn}

As has long been known~\cite{hand2009measuring}, automatically choosing the binary classifier with the higher AUROC makes perfect sense when two rather strong conditions are met: 1) the expected ratio of positives to negatives in the dataset where the classifier is to be deployed is balanced or nearly balanced, and 2) the cost of misclassifying a positive example as negative is equal or nearly equal to the cost of misclassifying a negative example as positive, i.e. that ${C_0}/{C_1} \approx 1$ and ${|\mathcal{P}|}/{|\mathcal{N}|} \approx 1$, where we denote by $\mathcal{P}$ the set of observations that are positive and $\mathcal{N}$ denotes the set of observations that are negative. $C_0$ and $C_1$ denote the cost of misclassifying a case from $\mathcal{N}$ as belonging to $\mathcal{P}$ and the converse, respectively. The area under the ROC curve of a binary classifier $\mathcal{F}$ represents the probability that given two observations $\rho \in \mathcal{P}$ and $\eta \in \mathcal{N}$, $\mathcal{F}$ assigns a higher score to $\rho$ than it does to $\eta$ \cite{bradley1997use}. 

But, if the costs or classes are not balanced, then it is not always sensible to directly compare the AUROCs in this pairwise manner. If one class has many more cases than the other, comparing a random positive and negative example is unlikely to occur in practice. Likewise, if the two types of errors differ in individual costs then it may not be as important that it is possible to separate a randomly chosen positive example from a randomly chosen negative example. 

Observe that when exact information is available on the ratios ${C_0}/{C_1}$ and ${|\mathcal{P}|}/{|\mathcal{N}|}$, it is always possible to find a binary classifier (point) taken from the ROC curve of classifier $\mathcal{F}_1$ that minimizes the total misclassification cost, and a binary classifier taken from the ROC curve of classifier $\mathcal{F}_2$ that minimizes the total misclassification cost, and then choose whichever classifier has lower total misclassification cost. However, as we (see example datasets, below)  and others demonstrate~\cite{provost1998case,hand2009measuring,drummond2000explicitly}), such a cost comparison no longer corresponds to a comparison of the AUROCs. In fact, classifiers with better AUROCs can perform much worse in total misclassification cost in these settings. 
%Hand has argued that AUROC is a poor measure of classification performance because this area treats all values of sensitivity equally likely \cite{hand2009measuring}.

 Much previous work has attempted to deal with the shortcomings of the AUROC by looking at different notions of a weighted AUROC~\cite{mcclish1989analyzing,maurer2020estimating,li2010weighted}; however, most of these formulations require fixing the cost distribution and class imbalance as a prior. (Two notable exceptions are \cite{hernandez2011brier} and the recent paper of  \cite{shao2024weighted}, that takes an interesting (and complementary approach) to solving the problem we address in this paper, namely simultaneously representing different weighted costs)). Versions of weighted AUROC that apply for  a fixed cost distribution and class imbalance do not easily extend to model uncertainty in the cost distribution or only bounds on the class imbalance (though we note that a previous paper of Guo et al. did define a "3D ROC histogram," with a general 3rd axis that could represent either classifier confidence or cost; where the cost version is closest in spirit to the present work~\cite{guo20193d}). 
 
 By instead making something similar to the weighted AUROC a slice of a ROC volume, lifted to 3D, we are  able to model  {\em ranges} of misclassification costs and class imbalance. We introduce a new ROC surface, where the volume integrated over this ROC surface becomes a generalization of the AUROC where misclassification costs are also naturally represented. Besides mathematical elegance, the advantage of our VOROS to previous weighted AUROC approaches is that we also have a very natural way to ask for the better classifier in a regime where the application domain gives only bounds on the expected class imbalances or misclassification costs, and does not assume these values are known exactly. We show that the VOROS is efficient to compute, and show in several benchmark datasets, that it will choose better classifiers, according to how costs and class imbalances are estimated to occur in the environment in which the classifier will be deployed.

We demonstrate the utility of this approach on two standard benchmark classification datasets, the  UCI Wisconsin Breast Cancer Dataset \cite{misc_breast_cancer_wisconsin_(diagnostic)_17} and BUSI \cite{al2020dataset}, as well as a highly imbalanced credit card fraud dataset. 
%Code is available at: https://github.com/ChristopherRatigan/VOROS

\section{Background and Definitions} 

\begin{defn}[ROC Space]
Fix a dataset $\mathcal{X}$ consisting of observations assigned the actual class labels positive ($1$) and negative ($0$). For a binary classifier $\mathcal{F}$, the performance of $\mathcal{F}$ on $\mathcal{X}$ is represented as a point in ROC space where $\mathcal{F}$ is mapped to the point $(x,y)$ with $x$ the false positive rate and $y$ the true positive rate of $\mathcal{F}$ on the dataset $\mathcal{X}$.
\end{defn}

An intuitive rule in ROC space is that one classifier does better than another classifier if it is above and to the left as the following definition makes precise.

\begin{defn}
    A point $(x_1,y_1)$ dominates another point $(x_2,y_2)$ in ROC space if it has a lower false positive rate and higher true positive rate, i.e. if $x_1\leq x_2$ and $y_1 \geq y_2$.
\end{defn}

%\begin{defn}
%    An ROC curve is a continuous function $y=f(X)$ in ROC space. 
%\end{defn}

There are a few points in ROC space that are important for our analysis. A perfect classifier, which correctly classifies all observation in $\mathcal{X}$ has ROC coordinates $(0,1)$. Likewise, the point $(1,0)$ represents a model which misclassifies every case. We complete the unit square in ROC space by defining the {\em baseline classifiers}.

\begin{defn}
    The baseline classifiers $\mathcal{B}_0$ and $\mathcal{B}_1$ are the classification models that label all observations as negative or positive respectively.
\end{defn}

%Note, in ROC coordinates the baseline $\mathcal{B}_1$ of guessing every example is positive is given by the point $(1,1)$ and the baseline $\mathcal{B}_0$ of guessing every example is negative is the point $(0,0)$.

%One of the benefits of ROC space is that it can be used to narrow down a collection of models to ones which may be optimal with respect to a broad range of performance metrics including expected cost, precision, balanced accuracy and many others \cite{provost2001robust}.

%Many binary classification models can be modified to produce a range of potential classifications. 
 There is a traditional way of going from points in ROC space to a curve for classifiers that can output a numerical score interpreted as the probability that an observation belongs to the positive class. Simply vary a threshold $s$ which divides the scores of the observations into positive and negative classes based on whether the score of a particular example is greater than or less than $s$ \cite{metz1978basic,swets1988measuring}.

% add formal definition of baseline classifiers...

\begin{defn}
    Given a classifier $\mathcal{F}: \mathcal{X} \to [0,1]$, let $\delta_{\mathcal{N}}(s)$ be the cumulative distribution of observations in the negative class and $\delta_{\mathcal{P}}(s)$ be that of the positive class. Then, the observed ROC curve for $\mathcal{F}$ is a plot of $(1-\delta_{\mathcal{N}}(s),1-\delta_{\mathcal{P}}(s))$.
\end{defn}

ROC curves inherit the following natural notion of dominance from the domination of individual points.

\begin{defn}
    Given two ROC curves $f$ and $g$, $f$ dominates $g$ if for every point $P$ on the graph of $g$ there is some point $Q$ on the graph of $f$ such that $Q$ dominates $P$ and no point on the graph of $f$ is strictly dominated by any point on the graph of $g$.
\end{defn}

% add something about discrete stuff here.

\begin{defn} The Upper Convex Hull of a collection of $n$ points ${\mathcal{C}}=\{(x_i,y_i)\}_{i=1}^n$, in ROC space is the boundary of the convex hull of ${\mathcal{C}}$ together with the additional points $(0,0)$, $(1,1)$ and $(1,0)$ running from $(1,1)$ to $(0,0)$ oriented counterclockwise.
\end{defn}

While the definition of ROC curve intuitively creates a continuous curve in ROC space for continuous distributions $\delta_{\mathcal{N}}$ and $\delta_{\mathcal{P}}$, it is common to take the discrete set of classifiers given by varying the threshold parameter on a finite dataset, connect them piecewise by lines, and call the resulting graph a ``curve'' \cite{drummond2006cost}. In what follows, we will focus on this discretized case, though much of this generalizes to continuous ``fitted'' ROC curves.

We note that for such piecewise linear ROC curves $f$ and $g$, the vertices of $f$ dominate the vertices of $g$ if and only if the graph of $f$ lies above the graph of $g$ over the entire interval $[0,1]$. In this way, domination in ROC space is equivalent to the usual domination of real valued functions. 

\begin{lem}
    The Upper Convex Hull of an ROC curve dominates the ROC curve.
\end{lem}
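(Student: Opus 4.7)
The plan is to reduce the lemma to showing that the UCH, viewed as a real-valued function $h(x)$ of the false positive rate, pointwise dominates the ROC curve $f(x)$ on $[0,1]$. This reduction is justified by the remark immediately preceding the lemma: for piecewise linear ROC curves, dominance in the sense defined earlier is equivalent to ordinary pointwise dominance of the corresponding real-valued functions. So it suffices to prove $h(x) \geq f(x)$ for every $x \in [0,1]$.

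To establish this inequality, I would first observe that the vertices $V_1, \ldots, V_n$ of the ROC curve are among the points whose convex hull defines the UCH (together with the auxiliary points $(0,0)$, $(1,0)$, and $(1,1)$), so each $V_i$ lies in that convex hull. The ROC curve is the piecewise linear interpolation of the $V_i$, and each line segment between consecutive vertices is contained in the hull by convexity. Hence the entire graph of $f$ lies inside the convex hull. The UCH, as the upper boundary of the convex hull running counterclockwise from $(1,1)$ to $(0,0)$, realizes the maximum $y$-coordinate of any hull point at each $x \in [0,1]$; applying this to $(x, f(x))$ gives $h(x) \geq f(x)$.

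The substantive step is the convex-geometry fact that the upper boundary of a convex set sits pointwise above any other point of the set sharing the same horizontal coordinate; the rest is really just unwinding definitions. I expect the main bookkeeping obstacle to be verifying that the UCH is well-defined as a function on all of $[0,1]$ rather than only on the horizontal span of the ROC vertices, but this is immediate because $(0,0)$ and $(1,1)$ sit in the generating set and so the hull spans $[0,1]$ horizontally. I do not anticipate a genuine difficulty beyond this, although it is worth noting in passing that the role of $(1,0)$ in the generating set is only to close the lower boundary of the hull; it never sits above the ROC curve and so cannot compromise the inequality $h(x) \geq f(x)$.
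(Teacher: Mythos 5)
Your argument is correct, but there is nothing in the paper to compare it against: the paper's ``proof'' of this lemma is a one-line deferral to \cite{provost2001robust}, so you have supplied a self-contained argument where the authors gave none. Your route is the natural one and it works: the reduction to pointwise dominance of real-valued functions is exactly licensed by the remark preceding the lemma, and the substantive step --- that the graph of the piecewise linear ROC curve lies in the convex hull of its vertices together with $(0,0)$, $(1,0)$, $(1,1)$, so the upper boundary of that hull sits weakly above it at every $x \in [0,1]$ --- is standard convex geometry; your observations that $(0,0)$ and $(1,1)$ guarantee the hull spans the full horizontal interval and that $(1,0)$ only shapes the lower boundary are the right bookkeeping. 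The only caveat worth flagging is that the paper's definition of curve dominance has a second clause (no point of the dominating curve is \emph{strictly} dominated by a point of the other), which your pointwise inequality does not address head-on; in degenerate cases (e.g.\ a horizontal segment of the hull at $y=1$ extending past the last ROC vertex) that clause requires a separate sentence using monotonicity of the ROC curve. Since you, like the paper, lean on the stated equivalence between curve dominance and pointwise dominance of functions, this is a shared imprecision rather than a gap in your reasoning.
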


\begin{proof}
    A full proof is given in \cite{provost2001robust}. 
\end{proof}

It has long been known that in ROC space, it is possible to compare the expected costs of classification models to decide which point on an ROC curve, or a collection of such curves is optimal for a given weighting of the relative numbers and costs of false positives and false negatives. The following definition helps to quantify this notion of cost. 

\begin{defn}
    Let ${\cal X = P \sqcup N}$ be the set of observations to be classified, where $\mathcal{P}$ denotes observations whose actual label is positive and likewise, $\mathcal{N}$ denotes negative observations. We define $C_0$ and $C_1$ to be the costs of an individual false positive and false negative respectively. Additionally, given a point $(x,y)$ in ROC space, we define the expected cost of $(x,y)$ to be
    $$C_0|\mathcal{N}|x+C_1|\mathcal{P}|(1-y)$$
\end{defn}

%Given a dataset $\mathcal{X}=\{x_1,...,x_n\}$ such that $\mathcal{X}=P \sqcup N$ splits into points with actual class labels Positive $P$ and Negative $N$, the expected cost of a classifier in ROC space $(x,y)$ on $\mathcal{X}$ is

%$$C_0|N|x+C_1|P|(1-y)$$

%where $C_0$ is the cost of a single false positive and $C_1$ is the cost of a single false negative. 
The point in ROC space with the maximal expected cost is $(1,0)$ which misclassifies every case. We can scale our cost by this maximum so that costs lie in the unit interval putting the costs of classifiers on a normalized scale.

\begin{defn}
The normalized expected cost of a point $(x,y)=(\text{FPR},\text{TPR})$ is given by

$$\text{Cost}(x,y,t)=tx+(1-t)(1-y)$$

where $t=\dfrac{C_0|\mathcal{N}|}{C_0|\mathcal{N}|+C_1|\mathcal{P}|}$ is the portion of cost borne by the false positives.
\end{defn}

This formulation of cost is well-known, though there has not been a universally accepted way to apply it to ROC curves in their entirety \cite{provost1998case}.

\begin{lem}
    Let $P(x_1,y_1)$ and $Q(x_2,y_2)$ be two points in ROC space and let $t \in [0,1]$ be fixed. Then, $\text{Cost}(x_1,y_1,t)=\text{Cost}(x_2,y_2,t)$ if and only if the slope of the line between $P$ and $Q$ is $\dfrac{t}{1-t}$.
\end{lem}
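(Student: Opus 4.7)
The plan is to reduce the statement to a direct algebraic manipulation of the definition of $\text{Cost}$, handling the boundary values of $t$ as special cases. First I would write out the equation $\text{Cost}(x_1,y_1,t)=\text{Cost}(x_2,y_2,t)$ using the definition, obtaining
\[
tx_1 + (1-t)(1-y_1) \;=\; tx_2 + (1-t)(1-y_2),
\]
and then collect the $t$ and $(1-t)$ terms on opposite sides to get
\[
t(x_1-x_2) \;=\; (1-t)(y_1-y_2).
\]

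Next I would address the generic case $t\in(0,1)$ and $x_1\ne x_2$: dividing both sides by $(1-t)(x_1-x_2)$ yields $(y_1-y_2)/(x_1-x_2)=t/(1-t)$, which is precisely the slope condition, and the argument is reversible. This handles the forward and reverse directions simultaneously, since each manipulation is an equivalence.

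The only subtlety, and the main thing to be careful about, is what "slope $t/(1-t)$" means at the endpoints. When $t=1$ the relation forces $x_1=x_2$, i.e. $P$ and $Q$ lie on a vertical line, which matches the convention that $t/(1-t)=+\infty$ corresponds to an infinite (vertical) slope; similarly $t=0$ forces $y_1=y_2$, a horizontal line, consistent with slope $0$. I would include a brief remark fixing this convention at the start of the proof so that the statement is interpreted consistently at the endpoints, and would also note that when $P=Q$ the claim is vacuous since the "line between $P$ and $Q$" is undefined but both costs agree trivially. I do not anticipate any real obstacle beyond being explicit about these degenerate cases.
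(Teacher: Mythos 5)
Your proposal is correct and follows essentially the same route as the paper's proof: expand the definition of $\text{Cost}$, rearrange to $t(x_1-x_2)=(1-t)(y_1-y_2)$, and divide to obtain the slope $\tfrac{t}{1-t}$, with the $t=1$ vertical-line convention handled the same way. Your extra care with the $t=0$ and $P=Q$ degenerate cases is a minor refinement of the same argument.
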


\begin{proof}
    The relevant equation is 
    $$tx_1+(1-t)(1-y_1)=tx_2+(1-t)(1-y_2)$$

    Rearranging yields
    $$t(x_1-x_2)=(1-t)(y_1-y_2)$$

    Dividing by $(1-t)(x_1-x_2)$ on both sides yields the desired result

    $$m=\dfrac{t}{1-t}=\dfrac{y_1-y_2}{x_1-x_2}$$

    Where if $t=1$, we let $m=\infty$ define a vertical line.
\end{proof}

\begin{defn}
    Given a fixed value of $t \in [0,1]$ an iso-performance line through $(x_1,y_1)$ is the line through $(x_1,y_1)$ in ROC space with slope $\dfrac{t}{1-t}$ representing all points with the same cost as $(x_1,y_1)$.
\end{defn}

Intuitively, iso-performance lines are the equivalence classes of points in ROC space with the same cost for a fixed value of $t$. These lines instill a linear order which divides a classifier from better performing classifiers above and to the left of the line (but not necessarily the point) and worse performing classifiers below and to the right of the iso-performance line.

\begin{lem}
    Let $t \in [0,1]$ be fixed and let $y=f(x)$ be an ROC curve. A point $(h,f(h))$ has the minimum cost of any point on the ROC curve if the upper convex hull of the curve lies below and to the right of the iso-performance line through $(h,f(h))$ with slope $m=\dfrac{t}{1-t}$.
\end{lem}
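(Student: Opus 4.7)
The plan is to use two facts established earlier in the paper: (i) by the previous lemma, two points have equal cost iff the line between them has slope $t/(1-t)$, so cost is constant along iso-performance lines; and (ii) by the lemma preceding it, the upper convex hull of an ROC curve dominates the curve. Putting these together reduces the claim to a short chain of inequalities.

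First I would unpack what ``below and to the right'' of the iso-performance line means in terms of cost. Since $\mathrm{Cost}(x,y,t)=tx+(1-t)(1-y)$, decreasing $x$ or increasing $y$ strictly decreases the cost. Combined with the previous lemma (cost is constant along lines of slope $t/(1-t)$), this shows that the iso-performance line through $(h,f(h))$ partitions ROC space into a closed half-plane (``above and to the left'') where every point has cost $\leq \mathrm{Cost}(h,f(h),t)$, and its complement where every point has strictly larger cost. The hypothesis therefore states exactly that every point of the upper convex hull has cost at least $\mathrm{Cost}(h,f(h),t)$.

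Next I would transfer this bound from the upper convex hull to the ROC curve itself. Let $P=(x_P,y_P)$ be an arbitrary point on the curve. By the dominance lemma for the upper convex hull, there exists a point $Q=(x_Q,y_Q)$ on the hull with $x_Q \leq x_P$ and $y_Q \geq y_P$. Monotonicity of cost in each coordinate then gives $\mathrm{Cost}(Q,t) \leq \mathrm{Cost}(P,t)$. Combining with the previous paragraph,
\[
\mathrm{Cost}(h,f(h),t) \;\leq\; \mathrm{Cost}(Q,t) \;\leq\; \mathrm{Cost}(P,t),
\]
so $(h,f(h))$ achieves the minimum cost on the curve.

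I do not expect a serious obstacle here; the proof is really an exercise in bookkeeping. The only delicate point is being careful about boundary: $(h,f(h))$ lies on the iso-performance line itself, so the relevant half-plane must be taken closed, and the $t=1$ degenerate case (vertical iso-line) should be handled by the convention from the previous lemma. A minor sanity check worth including is that the hypothesis implicitly forces $(h,f(h))$ to lie on the upper convex hull, since any curve point strictly dominated by a hull point would produce a hull point above and to the left of the iso-line, contradicting the assumption.
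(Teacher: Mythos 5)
Your proof is correct and rests on the same idea as the paper's: the iso-performance line through $(h,f(h))$ partitions ROC space by cost, so the hypothesis forces every relevant point to have cost at least $\mathrm{Cost}(h,f(h),t)$. You are in fact more careful than the paper's one-line argument, which leaves implicit the step from the upper convex hull to the curve itself that you supply via the dominance lemma and the coordinatewise monotonicity of cost.
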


\begin{proof}
     Let $P(h,f(h))$ be such as point, then points with a lower cost than $P$ lie above the line through $P$ with slope $\dfrac{t}{1-t}$, but if $P$ achieves the minimal cost of any point on the ROC curve, all points on the ROC curve must lie below and to the right of the iso-performance line as required.
\end{proof}

The term for these extremal iso-performance lines in convex geometry is ``supporting lines'' and the space of all such lines is the ROC Surface as explained below.

\begin{defn}
    Given a region $R$ in the plane, a supporting line $\ell$ of $R$ is a line intersecting $R$ such that all of $R$ lies on the same side of $\ell$.
\end{defn}

\begin{defn}
    Let $t \in [0,1]$, an \textit{optimal point} on an ROC curve is a point on the upper convex hull that has a supporting line of slope $\dfrac{t}{1-t}$.
\end{defn}

Note, it is possible for an ROC curve to have multiple optimal points. Such points occur when the upper convex hull contains a line segment, any point of which is optimal for $t$ such that $\dfrac{t}{1-t}$ is the slope of the segment.

%In particular, for each value of $t$, if we order the points of ROC space by normalized expected cost, we get a linear order of the points.

%\begin{defn}
%    Let $t$ be fixed and consider a point $(h,k)$ in ROC space. Then another point $(h',k')$ is a lesser classifier if $\text{Cost}(h',k',t) \geq \text{Cost}(h,k,t)$.
%\end{defn}

Not only can supporting lines be used to pick out the optimal classifier for a given cost parameter $t$, they also partition ROC space according to cost. The area of classifiers which cost less provides a natural ordering which is explicitly related to the expected cost. This stands in contrast to the area under the ROC curve defined in the introduction.
%\red{(and also see the Appendix for a detailed example of how domination in ROC space is not a strong measure of classifier performance with respect to cost.)}

% First, we show that domination in cost space gives a total ordering for each choice of normalized cost parameters. Next, we show that VOROS is negatively correlated with cost. Specifically, if a classifier has a lower cost for a range of values then it will have a higher VOROS over that range as well. While VOROS can be used as a single numerical score for a classification model and thereby give an ordering to various classifiers, one of the main contributions of our definition of ROC Surface (ROS), like Cost Curves before it, is that it avoids the trap of modeling a range of possible cost and class distributions with a single numerical score \cite{drummond2006cost}. 

%First we show that the VOROS obeys transitivity, and then we show  that if an ROC point has a lower expected cost for a range of values, then it will have a greater volume over that range as well.

\section{Main Contribution}

To connect our notion back to the area under the ROC curve, we now generalize this notion to a cost-aware space by defining the {\em area of lesser classifiers}.  

\begin{defn}
    Given $t\in[0,1]$, and a point $(h,k)$ in ROC space, the area of lesser classifiers of $(h,k)$, dependent on $t$, denoted $A_t(h,k)$ is the area of the points $(h',k')$ in ROC space, such that $(h',k')$ has a greater expected cost than $(h,k)$. Define such $(h',k')$ to be lesser classifiers to $(h,k)$ in this case.
\end{defn}

Note if we choose $(h,k)$ to be an optimal point on our curve, then the region containing lesser classifiers will always contain the area under the ROC curve, but it also typically includes additional area in the region above the ROC curve. As we see with the Credit Card Fraud dataset analyzed below, a  classifier with a worse AUROC can have a better area of lesser classifiers value for ranges of unbalanced classification class sizes or misclassification costs.

We are now ready to define the ROC surface, which will allow us to look at the area of lesser classifiers in particular ranges of class imbalance or cost imbalance. 

\begin{defn}
    The ROC surface, ROS, over a point $(h,k)$ in ROC space is the surface, in $(x,y,t)$-space, given by

    $$y=\dfrac{t}{1-t} (x-h)+k$$

    where $t \in [0,1]$
\end{defn}

The ROS parameterizes all lines through $(h,k)$ with nonnegative (possibly infinite) slope.

\begin{lem}
    The ROC surface over a point $(h,k)$ is a saddle surface with the equation

    $$t = \dfrac{Y}{X+Y}$$

    where $Y=y-k$ and $X=x-h$ are simply horizontal shifts of the ROC surface over the origin.
\end{lem}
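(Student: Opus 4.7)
The statement packages two claims: the rewriting of the defining equation, and the geometric assertion that the surface is of saddle type. For the algebraic identity I would start from $y = \frac{t}{1-t}(x-h) + k$ and substitute $Y = y-k$, $X = x-h$ to get $Y = \frac{t}{1-t}X$. Multiplying through by $1-t$ yields $Y(1-t) = tX$, so $Y = t(X+Y)$, and solving for $t$ gives $t = \frac{Y}{X+Y}$ whenever $X+Y \neq 0$. The map $(x,y,t) \mapsto (X,Y,t) = (x-h,\,y-k,\,t)$ is a rigid horizontal translation of $\mathbb{R}^3$ fixing the $t$-coordinate, so the ROS over $(h,k)$ is congruent to the ROS over the origin, which justifies calling the surface a horizontal shift of the latter.

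To certify the saddle classification I would compute the Hessian of $f(X,Y) = Y/(X+Y)$ on its domain $\{X+Y \neq 0\}$ and show $\det H_f < 0$ everywhere on it. A direct differentiation yields $f_{XX} = 2Y/(X+Y)^3$, $f_{YY} = -2X/(X+Y)^3$, and $f_{XY} = (Y-X)/(X+Y)^3$, from which
$$\det H_f \;=\; \frac{-4XY - (Y-X)^2}{(X+Y)^6} \;=\; -\frac{1}{(X+Y)^4},$$
which is strictly negative wherever defined. Hence every point of the graph has indefinite second-order form, so the surface has negative Gaussian curvature throughout and is genuinely of saddle type. As a sanity check, the level sets $t = c$ reduce to $Y = \frac{c}{1-c}X$, a pencil of lines through the origin with varying slope, which already signals that the surface is ruled rather than convex/concave.

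The only obstacle I anticipate is a bookkeeping one: the singular line $X+Y=0$ corresponds to $t=1$ and to the vertical iso-performance line in the original ROC plane, and is excluded by both parameterizations, so it must be noted and treated as a limiting case. Apart from that, the proof is purely mechanical, consisting of an algebraic rearrangement and a one-line Hessian calculation with no serious mathematical difficulty.
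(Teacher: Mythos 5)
Your proof is correct, and for the central identity it follows the same route the paper does: the paper's entire proof is the one-line remark that the formula follows by solving for $t$ in $y=\frac{t}{1-t}(x-h)+k$, which is exactly your substitution $Y(1-t)=tX \Rightarrow t=Y/(X+Y)$. Where you go beyond the paper is in actually certifying the word ``saddle'': the paper never justifies that claim, whereas you compute the Hessian of $f(X,Y)=Y/(X+Y)$ and obtain $\det H_f=-1/(X+Y)^4<0$, hence negative Gaussian curvature of the graph wherever the formula is defined. That calculation is correct (the numerator $-4XY-(Y-X)^2$ does collapse to $-(X+Y)^2$), and it is a genuine strengthening of what the paper records. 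One peripheral slip: the singular locus $X+Y=0$ of the formula is not the same as the $t=1$ slice. Setting $t=1$ in $Y(1-t)=tX$ forces $X=0$, i.e.\ the vertical iso-performance line, and since the surface only carries nonnegative slopes $\frac{t}{1-t}$ for $t\in[0,1]$, the line $X+Y=0$ meets the surface only at the origin $(X,Y)=(0,0)$. This does not affect your main argument, but the remark as written conflates the locus where the solved-for formula degenerates with the locus where the original parameterization becomes vertical.
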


\begin{proof}
    The proof follows by solving for $t$ in the formula for the ROC surface. 
\end{proof}

The ROC surface provides a natural coordinate system for assessing relative costs over a range of parameter values $t$ since if $t$ is a fixed constant, then for the ROC points $(x_1,y_1)$ and $(x_2,y_2)$, we need only compare the parallel lines 

$$y=\dfrac{t}{1-t}(x-x_1)+y_1 \text{ and } y=\dfrac{t}{1-t}(x-x_2)+y_2$$

In particular, setting $x=0$ yields $y_i - \dfrac{t}{1-t} x_i$. The classifier with the lower cost at this threshold then has a higher value of $y_i - \dfrac{t}{1-t} x_i$.

\begin{defn}
    The ROC Surface associated to an ROC curve $y=f(x)$, is the surface in $(x,y,t)$-space which for each value of $t$ is the maximum of the lines in ROC space through points $(p,f(p))$ with slope $\dfrac{t}{1-t}$. 
\end{defn}

This surface divides the unit cube, $[0,1]^3$, into two sets. When viewed from the vertical line $x=0,y=1$, points $P(x,y,t)$ behind the ROC surface map to points $(x,y)$ in ROC space such that the cost of $(x,y)$ is greater than that of the optimal point on the ROC curve at that $t$ value. Similarly, points $P(x,y,t)$ in front of the ROC surface represent points which perform better than the optimal point on the ROC curve at the associated $t$ value.

This gives a natural notion of performance for the whole ROC curve: the volume over the ROC surface.

\begin{defn}
    The volume over the ROC Surface, VOROS, is defined by
    $$VOROS(f(x))=\int_0^1 \max_{x \in [0,1]}(A_t(x,f(x)))dt$$

    Where $\text{max}(A_t(x,f(x)))$ is the maximal area of lesser classifiers for any point on the ROC curve taken for each $t$.
\end{defn}

One of the useful properties of VOROS is that it can be readily modified to handle ranges and distributions over the relative costs of misclassifications. This is useful because in many applications, it is reasonable to assume one might have bounds on the expected cost imbalances, but not know them exactly. We can capture this uncertainty in our VOROS measure as follows: 

\begin{defn}
    Let $\mu$ be a probability measure for $t \in [0,1]$, let $[a,b] \subset [0,1]$ and let $A_t(x,f(x))$ be the area of lesser classifiers associated to a point on the ROC curve $y=f(x)$, then 
    \begin{multline*}
        VOROS(f(x),[a,b],\mu)\\=\dfrac{1}{\mu([a,b])} \int \max_{x \in [0,1]}(A_t(x,f(x))) d\mu(t)
        \end{multline*}
\end{defn}

This is a generalization of our initial definition of VOROS (Definition~20) since $VOROS(f(x),[0,1], \mu) = VOROS(f(x))$ if $\mu$ is the standard measure on $\mathbb{R}$. In what follows we omit $\mu$ assuming it to be the standard measure on $\mathbb{R}$.

\begin{lem}
    Given an ROC point $(h,k)$, and fixed costs $(t,1-t)$, if $(h,k)$ outperforms both baseline classifiers, then the area of lesser classifiers for $(h,k)$ has the general form below.
    
    $$A_t(h,k)=1+\dfrac{(1-k)^2}{2}+\dfrac{h^2}{2}-h(1-k)-\dfrac{(1-k)^2}{2t}-\dfrac{h^2}{2(1-t)}$$

    %$$A_t(h,k)=1+\dfrac{(1-k)^2}{2}+\dfrac{h^2}{2}-h(1-k)-\dfrac{(1-k)^2}{2(1-t)}-\dfrac{h^2}{2t}$$
\end{lem}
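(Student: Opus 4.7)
The plan is to compute $A_t(h,k)$ geometrically: identify the region of \emph{better} classifiers (points with strictly lower cost) inside the unit square $[0,1]^2$, compute its area, and subtract from $1$. By the iso-performance characterization (Lemma~8), the points with the same cost as $(h,k)$ form the line $\ell$ through $(h,k)$ of slope $m = t/(1-t)$, and the better classifiers lie strictly above $\ell$. So
\[
A_t(h,k) \;=\; 1 \;-\; \mathrm{Area}\bigl(\{(x,y)\in[0,1]^2 : y > m(x-h)+k\}\bigr).
\]

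Next, I would use the baseline hypothesis to pin down the geometry. The cost at $\mathcal{B}_0=(0,0)$ is $1-t$, the cost at $\mathcal{B}_1=(1,1)$ is $t$, and the cost at $(h,k)$ is $th + (1-t)(1-k)$. The assumption that $(h,k)$ outperforms both baselines is therefore equivalent to saying that $\ell$ passes on or above $(0,0)$ and on or right of $(1,1)$. Equivalently, $\ell$ exits the unit square through the left edge at $L = (0,\, k - mh)$ and through the top edge at $T = (h + (1-k)/m,\, 1)$, both of which lie in $[0,1]^2$. Consequently, the better-classifier region is exactly the right triangle with legs along the top and left sides of the unit square, with vertices $L$, $T$, and $(0,1)$.

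The triangle is axis-aligned with its right angle at $(0,1)$: its horizontal leg has length $h + \frac{1-t}{t}(1-k)$ and its vertical leg has length $(1-k) + \frac{t}{1-t}h$. Taking half the product of the two legs and subtracting from $1$, I obtain
\[
A_t(h,k) \;=\; 1 - h(1-k) - \frac{t\,h^2}{2(1-t)} - \frac{(1-t)(1-k)^2}{2t}.
\]
A one-line rewrite using $\frac{t}{1-t} = \frac{1}{1-t} - 1$ and $\frac{1-t}{t} = \frac{1}{t} - 1$ splits the last two terms into constant parts $\tfrac{h^2}{2}$ and $\tfrac{(1-k)^2}{2}$ together with the two ``$1/t$'' and ``$1/(1-t)$'' terms in the claimed expression, producing exactly the stated formula.

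The main obstacle is not the algebra but justifying the geometric picture: one must verify that the outperforming-baselines hypothesis is precisely what is needed for $\ell$ to leave $[0,1]^2$ through the top and left edges rather than through the right or bottom edges, so that the better-classifier region is a clean triangle rather than a region clipped by an extra side of the square. Once that reduction is made, the rest is an axis-aligned triangle area and a short algebraic identity.
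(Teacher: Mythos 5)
Your proof is correct and takes essentially the same approach as the paper's: both compute $A_t(h,k)$ as one minus the area of the right triangle of better classifiers cut off at the corner $(0,1)$, with legs $h+\frac{1-k}{m}$ and $(1-k)+mh$ where $m=\frac{t}{1-t}$, followed by the same substitution to split off the constant terms $\frac{h^2}{2}$ and $\frac{(1-k)^2}{2}$. You are, if anything, slightly more explicit than the paper about why the outperforming-baselines hypothesis forces the iso-performance line to exit the unit square through the left and top edges, which the paper asserts with only a brief justification.
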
    

\begin{proof} 

\begin{figure}
    \centering
    \includegraphics{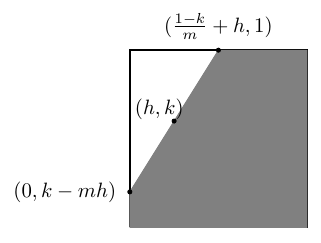}
    \caption{The area of Lesser Classifiers for the point $(h,k)$ lies below the iso-performance line with slope $m=\frac{t}{1-t}$.}
    \label{lesser}
\end{figure}

Let $m=\frac{t}{1-t}$ be the slope of the iso-performance line. First, note that since $(h,k)$ outperforms the baseline classifiers, the area of lesser classifiers will be of the general form of the shaded region in Figure~\ref{lesser}.

This is because the line bounding this region is above and to the left of the parallel lines through $(0,0)$ and $(1,1)$. Thus, the dividing line for lesser classifiers must intersect the $y$-axis above $(0,0)$ and intersect the line $y=1$ to the left of $(1,1)$.

As such, the area of this region is given by one minus the area of the white triangle. The horizontal base of the triangle is $B = \frac{1-k}{m} +h$ while the height of the triangle is $H= 1-k+mh$. Hence, the area of the shaded region is

\begin{align*}
    A_t(h,k) &= 1-\dfrac{1}{2} BH\\
    &= 1- \frac{1}{2}\left(\dfrac{1-k}{m}+h\right) \left(1-k+mh\right)\\
    &= 1-\frac{1}{2} \left(\dfrac{(1-k)^2}{m}+2(1-k)h+mh^2 \right)\\
    &= 1-\dfrac{(1-k)^2}{2m} -h(1-k)-\dfrac{mh^2}{2}\\
\end{align*}

Since $m=\dfrac{t}{1-t}=-1+\dfrac{1}{1-t}$, the area of lesser classifiers is then

$$1+\frac{(1-k)^2}{2}+\frac{h^2}{2}-h(1-k)-\frac{(1-k)^2}{2t}-\frac{h^2}{2(1-t)}$$
\end{proof}

Integrating this last expression yields the following result

\begin{lem}
    Given a vertex $v=(h,k)$ from the upper convex hull of an ROC curve $\{x_i,y_i\}_{i=0}^n$. If $v$ is an optimal point on the cost interval $[a,b] \subset [0,1]$, then, the volume of lesser classifiers for the ROC curve on the interval $(a,b)$ is given by

    \begin{multline*}
        \frac{1}{b-a} \left(1+\dfrac{(1-k)^2}{2}+\dfrac{h^2}{2}-h(1-k) \right)\\ +\dfrac{(1-k)^2}{2} \text{ln}\left(\dfrac{1-b}{1-a} \right) - \dfrac{h^2}{2} \text{ln}\left(\dfrac{b}{a} \right)
    \end{multline*}
    \label{explicit}
\end{lem}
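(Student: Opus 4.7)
The plan is to reduce the max in the VOROS integrand to a single evaluation and then carry out a routine one-variable integration in $t$. First, since by hypothesis $v=(h,k)$ is an optimal point on the entire interval $[a,b]$, for every $t\in[a,b]$ the supporting line of slope $\tfrac{t}{1-t}$ to the upper convex hull of the ROC curve passes through $v$, so $v$ minimizes the expected cost among all points on the curve at that $t$. Consequently $\max_{x \in [0,1]} A_t(x, f(x)) = A_t(h,k)$ throughout $[a,b]$, and taking $\mu$ to be Lebesgue measure the definition of $VOROS(f,[a,b],\mu)$ simplifies to $\tfrac{1}{b-a}\int_a^b A_t(h,k)\, dt$.

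Next, I substitute the closed-form expression for $A_t(h,k)$ from the preceding lemma and split the integral into three pieces. The four terms not involving $t$, namely $1 + \tfrac{(1-k)^2}{2} + \tfrac{h^2}{2} - h(1-k)$, integrate to $(b-a)$ times themselves, and after the $\tfrac{1}{b-a}$ normalization they produce the first parenthesized block of the claimed formula. The remaining two pieces $-\tfrac{(1-k)^2}{2t}$ and $-\tfrac{h^2}{2(1-t)}$ require only the elementary antiderivatives $\int t^{-1}\,dt = \ln t$ and $\int (1-t)^{-1}\,dt = -\ln(1-t)$; evaluating between $a$ and $b$ and collecting signs produces the two logarithmic contributions with coefficients $\pm\tfrac{(1-k)^2}{2}$ and $\pm\tfrac{h^2}{2}$ respectively.

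I do not foresee any real mathematical obstacle: the whole argument reduces to a pointwise identification of the maximum followed by three elementary antiderivatives. The main thing to watch is the sign and orientation bookkeeping for the logarithms, i.e.\ that the arguments emerge as $(1-b)/(1-a)$ versus $(1-a)/(1-b)$ and as $b/a$ versus $a/b$ in the orientations of the claimed formula, and that the $\tfrac{1}{b-a}$ normalization is applied consistently across the constant and logarithmic parts. Using $a<b$ in $(0,1)$ gives $\ln(b/a)>0$ and $\ln((1-a)/(1-b))>0$, which provides a useful sign sanity check once these are combined with the overall minus signs inherited from the closed form for $A_t(h,k)$.
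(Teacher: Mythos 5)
Your approach is exactly the paper's: the paper offers no written proof of this lemma beyond the remark ``Integrating this last expression yields the following result,'' and your plan --- identify $\max_x A_t(x,f(x))=A_t(h,k)$ on $[a,b]$ from optimality of $v$, then integrate the closed form of $A_t(h,k)$ term by term --- is precisely that computation. The reduction of the max and the three elementary antiderivatives are all correct.

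However, you explicitly defer ``the sign and orientation bookkeeping for the logarithms,'' and that is the one place where you cannot stop short, because carrying it out does not reproduce the displayed formula verbatim. From $A_t(h,k)=1+\tfrac{(1-k)^2}{2}+\tfrac{h^2}{2}-h(1-k)-\tfrac{(1-k)^2}{2t}-\tfrac{h^2}{2(1-t)}$ you get
$\int_a^b -\tfrac{(1-k)^2}{2t}\,dt=-\tfrac{(1-k)^2}{2}\ln\bigl(\tfrac{b}{a}\bigr)$ and $\int_a^b -\tfrac{h^2}{2(1-t)}\,dt=+\tfrac{h^2}{2}\ln\bigl(\tfrac{1-b}{1-a}\bigr)$, so after dividing by $b-a$ the normalized volume is
$\bigl(1+\tfrac{(1-k)^2}{2}+\tfrac{h^2}{2}-h(1-k)\bigr)+\tfrac{1}{b-a}\bigl(\tfrac{h^2}{2}\ln\tfrac{1-b}{1-a}-\tfrac{(1-k)^2}{2}\ln\tfrac{b}{a}\bigr)$.
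The statement as printed instead attaches $\tfrac{(1-k)^2}{2}$ to $\ln\tfrac{1-b}{1-a}$ and $\tfrac{h^2}{2}$ to $\ln\tfrac{b}{a}$, and applies the $\tfrac{1}{b-a}$ factor only to the constant block; neither reading (normalized or unnormalized volume) makes the printed expression agree with the integral. So finish the bookkeeping explicitly: your method is sound and is the intended proof, but the honest output of that method is the formula above, and you should state it rather than leave the signs and pairings as a caveat.
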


Note that Lemma~\ref{explicit} gives an explicit way to compute the VOROS between points in the upper convex hull of ROC space, which will contain at most $n$ points. Thus the VOROS can be calculated explicitly in time on the order  of the time it takes to calculate the convex hull of $n$ points. In our implementation we use the Quickhull algorithm~\cite{bykat1978convex,eddy1977new} which runs in $O(n \log n)$ expected time. 

There is a direct relationship between the minimum normalized expected cost of a classifier and the volume integrated over the ROC surface.

\begin{thm}
    Let $(h,k)$ be the optimal point on an ROC curve for $t \in [a,b]$. If $c_t(h,k)$ is the normalized expected cost of this point, then the volume over the ROC surface is simply

    $$\dfrac{1}{b-a}\int_a^b 1-\dfrac{\big(c_t(h,k)\big)^2}{2t(1-t)}dt$$
\end{thm}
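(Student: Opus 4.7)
My plan is to reduce the claim to a pointwise algebraic identity between the integrand on the right-hand side and the area-of-lesser-classifiers formula established in the previous lemma. Specifically, writing $c_t := c_t(h,k) = th + (1-t)(1-k)$, I aim to show
\[
A_t(h,k) \;=\; 1 \;-\; \frac{c_t^{\,2}}{2t(1-t)},
\]
pointwise for each $t \in [a,b]$; since $(h,k)$ is optimal on $[a,b]$, the VOROS restricted to that interval is by definition $\tfrac{1}{b-a}\int_a^b A_t(h,k)\,dt$, and substituting the identity above yields the stated formula immediately.

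The first step is to expand
\[
c_t^{\,2} \;=\; t^2 h^2 \;+\; 2t(1-t)h(1-k) \;+\; (1-t)^2(1-k)^2
\]
and divide by $2t(1-t)$, obtaining
\[
\frac{c_t^{\,2}}{2t(1-t)} \;=\; \frac{t\,h^2}{2(1-t)} \;+\; h(1-k) \;+\; \frac{(1-t)(1-k)^2}{2t}.
\]
The second step is to take the expression for $A_t(h,k)$ from the preceding lemma and group its $h^2$ terms and $(1-k)^2$ terms separately. Using the elementary identities
\[
\frac{h^2}{2} - \frac{h^2}{2(1-t)} \;=\; -\frac{t\,h^2}{2(1-t)},
\qquad
\frac{(1-k)^2}{2} - \frac{(1-k)^2}{2t} \;=\; -\frac{(1-t)(1-k)^2}{2t},
\]
the formula for $A_t(h,k)$ collapses to $1 - h(1-k) - \tfrac{t h^2}{2(1-t)} - \tfrac{(1-t)(1-k)^2}{2t}$, which is exactly $1 - \tfrac{c_t^{\,2}}{2t(1-t)}$.

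There is no serious obstacle here; the only thing to be careful about is the matching of terms between the two closed-form expressions. Once the pointwise identity is in hand, integrating over $[a,b]$ and dividing by $b-a$ finishes the proof. Note that the identity makes the geometric content transparent: the ``triangle above the iso-performance line'' whose area we subtract from $1$ in the lemma has base and height proportional to $c_t/t$ and $c_t/(1-t)$ respectively, so its area is $c_t^{\,2}/(2t(1-t))$.
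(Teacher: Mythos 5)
Your proposal is correct and follows essentially the same route as the paper: both proofs reduce the theorem to the pointwise identity $A_t(h,k) = 1 - c_t^2/(2t(1-t))$ and verify it by expanding $c_t^2 = (th+(1-t)(1-k))^2$ and matching terms against the closed form of $A_t(h,k)$ from the preceding lemma. Your closing geometric remark (that the white triangle has base $c_t/t$ and height $c_t/(1-t)$) is a nice touch that would let one bypass the algebra entirely, but it does not change the substance of the argument.
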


\begin{proof} 
Since the volume over the ROC surface is simply the integral of the area bounded by a classifier at each threshold, all we need to show is that for each value of $t$, we have the formula

$$A_t(h,k) = 1-\dfrac{c_t(h,k)^2}{2t(1-t)}$$

From earlier, we know that the left hand side is

$$A_t(h,k)=1+\dfrac{(1-k)^2}{2}+\dfrac{h^2}{2}-h(1-k)+\dfrac{(1-k)^2}{2t} -\dfrac{h^2}{2(1-t)}$$

The right hand side is simply,
\begin{multline*}
    1-\dfrac{c_t(h,k)^2}{2t(1-t)} = 1-\dfrac{(th+(1-t)(1-k))^2}{2t(1-t)}\\
    =1 - \dfrac{t^2h^2+2t(1-t)h(1-k)+(1-t)^2(1-k)^2}{2t(1-t)}\\
    =1 - \left(\dfrac{th^2}{2(1-t)}+h(1-k)+\dfrac{(1-t)(1-k)^2}{2t} \right)\\
    =1 - \left(\dfrac{h^2}{2(1-t)}+h(1-k)+\dfrac{(1-k)^2}{2t}-\dfrac{(1-k)^2}{2} \right)
\end{multline*}

where in the last equation we used that $\dfrac{t}{1-t}=-1+\dfrac{1}{1-t}$. From here distributing the negative and rearranging terms yields the desired result.
\end{proof}

This theorem shows an important feature of the VOROS. Given two ROC Surfaces  $S_1$ and $S_2$ associated to ROC curves $f_1$ and $f_2$, if for all $t \in [a,b] \subseteq [0,1]$ we have that $S_1$ has a lower cost than $S_2$, then $\text{VOROS}(f_1,[a,b],\mu) > \text{VOROS}(f_2,[a,b],\mu)$ since the associated integrands have the same inequality. %Fix notation

There is a natural probabilistic interpretation of the VOROS: it is the probability that choosing $x,y\in [0,1]$ uniformly and independently at random and $t$ independently according to $\mu$, the randomly chosen point $(x,y)$ has a higher cost than an optimal point of the ROC curve. This is particularly useful for the problem of comparing performance of classifiers in cost or class imbalanced problems because if we can estimate the cost or class imbalance, we get a more accurate measure of classifier performance than from the AUROC alone.

For example, the AUROC of the hybrid classifier consisting of the two baselines $\mathcal{B}_0$ and $\mathcal{B}_1$ is $\frac{1}{2}$, but picking the better of the baseline classifiers will always yield a classification model which has less than $\frac{1}{2}$ the maximal expected cost so long as false positives and false negatives are unequal in aggregate cost. 

\begin{ex}
    Consider the ROC curve consisting of the vertices $\{(0,0),(1,1)\}$. The Volume over the ROC surface for this curve is $\frac{3}{2}-\text{ln}(2)\approx .807$.
\end{ex}

    \begin{figure}
    \centering
    \includegraphics{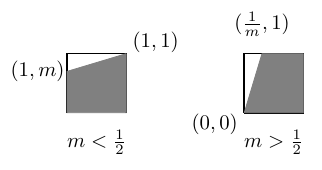}
    \caption{The two possibilities for the area of lesser classifiers for the better of the baselines.}
    \label{fig:base_areas}
    \end{figure}

    \begin{figure}
    \centering
    \includegraphics{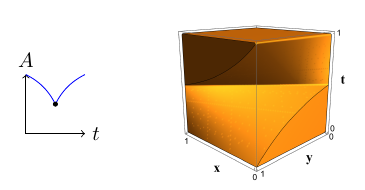}
        \caption{(Left) Graph of $A_t(\{(0,0),(1,1)\})$. (Right) The Volume bounded by these areas.}
        \label{fig: Area t}
    \end{figure}

Set $m=\frac{t}{1-t}$. Then picking the better classifier gives us either of the areas in Figure~\ref{fig:base_areas}. Either way, the area of lesser classifiers is one minus the area of the white triangle.

   $$ A_t = 1-\dfrac{1}{2} BH =\begin{cases} 1-\dfrac{1}{2m} & m>\frac{1}{2}\\
    \\
    1-\dfrac{m}{2} & m<\frac{1}{2} \end{cases}$$

Substituting the expression $m=\frac{t}{1-t}$ gives us the integral.

$$\int_0^1 A_t dt = \int_0^{\frac{1}{2}}1-\frac{t}{2(1-t)} dt + \int_{\frac{1}{2}}^1 1-\frac{(1-t)}{2t} dt$$

A graph of $A_t\left(\{(0,0),(1,1)\}\right)$ as a function of $t$ next to the volume as a region in $(x,y,t)$-space is shown in Figure~\ref{fig: Area t}. The minimum value of the graph of $A_t$ is $0.5$, which occurs when the positive and negative classes have equal aggregate cost, i.e. $t=1-t=\frac{1}{2}$. The value of VOROS as a performance measure stems from the way in which the graph of $A_t$ approaches $1$ on both ends of the interval $[0,1]$: The more imbalanced the classification problem, the less room there is to improve over the better of the two baselines.

\section{Application to the Wisconsin Breast Cancer dataset}

We first show the application of the VOROS to a classic dataset from the UCI repository, namely, \cite{misc_breast_cancer_wisconsin_(diagnostic)_17}. This is a standard binary classification benchmark dataset with a moderate class imbalance (357 positive observations to 212 negative examples). We directly apply logistic regression, naive Bayes and random forest classifiers (all using sklearn default parameters) to a stratified train/test split. Table~\ref{tab:wisc_table_1} summarizes the overall AUROC and VOROS of each classifier without any known bounds on the aggregate cost ratio on the test set. 

\begin{table}
\begin{center}
\begin{tabular}{|c|c|c|}
\hline
 & VOROS([0,1]) & AUROC\\
 \hline
    Log & 99.8\% & 99.3\%\\
    \hline
    Forest & 99.8\% & 99.0\%\\
    \hline
    Naive & 98.4\% & 95.1\%\\
    \hline
    Baseline & 80.7\% & 50\%\\
    \hline
    
\end{tabular}
\caption{Table of VOROS and AUROC values for classifiers trained on the Wisconsin Breast Cancer dataset.}
\label{tab:wisc_table_1}
\end{center}
\end{table}

Notice that the area under the ROC curve is always less than that of the corresponding volume over the ROC surface. While naively, it may be tempting to claim that both VOROS and AUROC rank the classifiers as $\text{Log}>\text{Forest}>\text{Naive}$, the situation is a bit more subtle than this as shown in Figure~\ref{fig:Wisconsin}.

\begin{figure}
\centering
\includegraphics[width=.35\textwidth]{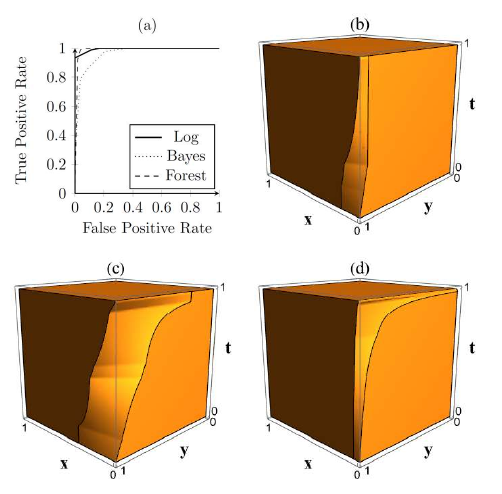}
\caption{(a) A plot of the ROC curves  for the Wisconsin Breast Cancer dataset. (b) The VOROS for the Logistic Regression classifier. (c) The VOROS for the Bayes classifier. (d) The VOROS for the Random Forest classifier.}
\label{fig:Wisconsin}
\end{figure}

%\begin{figure}
%\centering
%\begin{tikzpicture}[scale=2]
    %logistic
%    \draw[blue] (0.0,0.0) -- (0.0,0.9333333333333333) -- (0.11320754716981132,0.9888888888888889) -- (0.16981132075471697,1.0) -- (1.0,1.0);
    %naive
%    \draw[orange] (0.0,0.0) -- (0.0,0.2777777777777778) -- (0.03773584905660377,0.7888888888888889) -- (0.09433962264150944,0.8666666666666667) -- (0.20754716981132076,0.9777777777777777) -- (0.3584905660377358,1.0) -- (1.0,1.0);
    %forest
%    \draw[green] (0.0,0.0) -- (0.018867924528301886,0.9222222222222223) -- (0.03773584905660377,0.9888888888888889) -- (0.05660377358490566,1.0) -- (1.0,1.0);

    %roc space
%    \draw[thick] (0,0) -- (1,0);
%    \draw[thick] (0,0) -- (0,1);
%\end{tikzpicture}
%\caption{}
%\end{figure}
    %\includegraphics[scale=0.5]{roc_wisconsin.png}

Here, we see that our logistic regression and random forest classifier each dominate the naive Bayes model in ROC space. Thus, both logistic regression and random forest do better than naive Bayes for all choices of cost/class imbalance. However, logistic regression and random forest cannot be ranked so definitively since the ROC curves cross \cite{hand2001simple}. Which classifier does best will depend on the costs.

\begin{table}
\begin{center}
\begin{tabular}{|c|c|c|}
\hline
     & VOROS([0,0.25]) & VOROS(0.75,1)\\
     \hline
    Log & 99.8\% & 99.9\%\\
    \hline
    Forest & 99.9\% & 99.6\%\\
    \hline
    Baseline & 92.5\% & 92.5\%\\
    \hline
\end{tabular}
\caption{Table summarizing the VOROS for two different cost ranges on the Wisconsin Breast Cancer dataset.}
\label{tab:wisc_table_2}
\end{center}
\end{table}
 Thus, in ROC space neither curve dominates the other. This can be addressed by VOROS if we are given bounds on the parameter $t$. For example, if we take $t$ to be in the range $0 \leq t \leq 0.25$, then we get that random forest outperforms logistic regression. On the other hand, if $t$ were in the range $0.75 \leq t \leq 1$, then we get the opposite ranking as shown in Table~\ref{tab:wisc_table_2}. Recall that these percentages represent the portion of the ROC points which are  lesser classifiers in this range of cost values. In particular, over this range, picking the better of the two baselines already outperforms over 92\% of ROC points.

\section{Application to BUSI}

We also consider a more modern breast cancer dataset, the Breast UltraSound Images dataset, henceforth BUSI~\cite{al2020dataset}. This dataset consists of full ultrasound breast images, from 600 women patients.  It is actually a 3-class dataset. So, to make it a binary classification task, we consider  both ``normal'' and ``benign'' to be the negative class and ``malignant'' to  be the positive class. 

We encode the data using Pytorch's default vision transformer after cropping the images and scaling them to be the same size. The resulting numerical features are then used to fit logistic regression (with $C=1000$ and $max\_iter=25$), naive Bayes (sklearn default), and random forest classifiers (sklearn default). %The resulting ROC curves on the test data (with $test\_size=.2$) are shown in Figure~\ref{fig:BUSI curves}.

\begin{table}
\begin{center}
\begin{tabular}{|c|c|c|}
\hline
 & VOROS([0,1]) & AUROC\\
 \hline
    Log & 97.2\% & 91.8\%\\
    \hline
    Forest & 97.5\% & 93.3\%\\
    \hline
    Naive & 95.4\% & 88.3\%\\
    \hline
    Baseline & 80.7\% & 50\%\\
    \hline
    
\end{tabular}
\caption{Table summarizing the VOROS and AUROC values for classifiers trained on the BUSI dataset.}
\label{tab:BUSI}
\end{center}
\end{table}

Table~\ref{tab:BUSI} summarizes the results. As we have seen before, the area under the ROC curve is less than that of the corresponding volume over the ROC surface. The ranking of nonbaseline classifiers according to AUROC is $\text{Forest}>\text{Log}>\text{Naive}$, which is the same ranking for the VOROS over the entire interval $[0,1]$, however, plotting the ROC curves, a different story emerges.

%logistic data
\begin{filecontents}{log_busi.dat}
    x   y
    0.0   0.0
    0.0   0.35
    0.02586206896551724   0.725
    0.05172413793103448   0.8
    0.1896551724137931   0.875
    0.28448275862068967   0.925
    0.7672413793103449   1.0
    1.0   1.0
\end{filecontents}

%naive data
\begin{filecontents}{naive_busi.dat}
    x   y
    0.0   0.0
    0.0   0.225
    0.008620689655172414   0.375
    0.02586206896551724   0.475
    0.04310344827586207   0.55
    0.06896551724137931   0.625
    0.12931034482758622   0.775
    0.16379310344827586   0.8
    0.41379310344827586   0.925
    0.5775862068965517   1.0
    1.0   1.0
\end{filecontents}
%forest data
\begin{filecontents}{forest_busi.dat}
    x   y
    0.0   0.0
    0.0   0.375
    0.02586206896551724   0.675
    0.08620689655172414   0.825
    0.2413793103448276   0.95
    0.31896551724137934   0.975
    0.75   1.0
    1.0   1.0
\end{filecontents}

%\begin{center}
%\begin{tikzpicture}[scale=2]
%logistic curve
%\draw[blue] (0.0,0.0) -- (0.0,0.35) -- (0.02586206896551724,0.725) -- (0.05172413793103448,0.8) -- (0.1896551724137931,0.875) -- (0.28448275862068967,0.925) -- (0.7672413793103449,1.0) -- (1.0,1.0);

%naive curve
%\draw[green] (0.0,0.0) -- (0.0,0.225) -- (0.008620689655172414,0.375) -- (0.02586206896551724,0.475) -- (0.04310344827586207,0.55) -- (0.06896551724137931,0.625) -- (0.12931034482758622,0.775) -- (0.16379310344827586,0.8) -- (0.41379310344827586,0.925) -- (0.5775862068965517,1.0) -- (1.0,1.0);

%forest curve
%\draw[orange] (0.0,0.0) -- (0.0,0.375) -- (0.02586206896551724,0.675) -- (0.08620689655172414,0.825) -- (0.2413793103448276,0.95) -- (0.31896551724137934,0.975) -- (0.75,1.0) -- (1.0,1.0);

%\end{tikzpicture}

%\includegraphics[scale=0.5]%{roc_busi.png}%{busi_}
%{}    
%\end{center}

In Figure~\ref{fig:BUSI curves} all three curves intersect, hence no single model dominates, and which of these three models performs better will depend on the specific cost and class imbalances. For example, Table~\ref{tab:BUSI2} summarizes the VOROS for a few different choices of cost intervals.

\begin{table}
\begin{center}
\begin{tabular}{|c|c|c|c|}
\hline
     & V([0,1/3]) & V(1/3,2/3) & V(2/3,1)\\
     \hline
    Log & 96.5\% & 96.7\% & 98.4\%\\
    \hline
    Forest & 98.0\% & 96.7\% & 98.0\%\\
    \hline
    Naive & 96.4\% & 93.5\% & 96.1\%\\ 
    \hline
    Baseline & 89.2\% & 63.7\% & 89.2\%\\
    \hline
\end{tabular}
\caption{Table summarizing the VOROS over different ranges of costs for the BUSI dataset. Note we abbreviate VOROS to simply V due to space constrains.}
\label{tab:BUSI2}
\end{center}
\end{table}

%\begin{center}
%\begin{tabular}{|c|c|c|c|}
%\hline
%     & V([0,0.25]) & V(0.25,0.75) & V(0.75,1)\\
%     \hline
%    Log & 98.6\% & 98.5\% & 99.3\%\\
%    \hline
%    Forest & 98.1\% & 98.8\% & 99.2\%\\
%    \hline
%    Naive & 98.7\% & 97.5\% & 98.2\%\\ 
%    \hline
%    Baseline & 92.5\% & 68.9\% & 92.5\%\\
%    \hline
%\end{tabular}
%\end{center}

While in Figure~\ref{fig:BUSI curves} it seems the random forest model performs best, Table~\ref{tab:BUSI2} shows that the logistic regression model performs better for cases where the aggregate cost of false positives is much higher than that of false negatives. This can be confirmed by a more careful inspection of Figure~\ref{fig:BUSI curves} where the solid curve crosses paths with the dotted curve. There is a range of steeply sloped lines where the solid curve will perform better and these contribute to its better performance in the VOROS for the rightmost column above.

%Since our logistic model dominates our naive Bayes model, $\text{Log}> \text{Naive}$ for any of these intervals. However, the ranking of our random forest classifier, depends on interval. The random forest classifier outperforms all the other models on the interval [0.25,0.75] since when the costs are roughly equal, the iso-performance line through the upper left vertex of the random forest classifier's convex hull is above and to the left of any parallel line on the other two curves. Interestingly this is also the interval over which the baseline classifiers perform the worst.

Overall, we see that how classifiers should be ranked depends strongly on the relative aggregate costs of false positives and false negatives on these data.

\begin{figure}
\centering
\includegraphics[width=.40\textwidth]{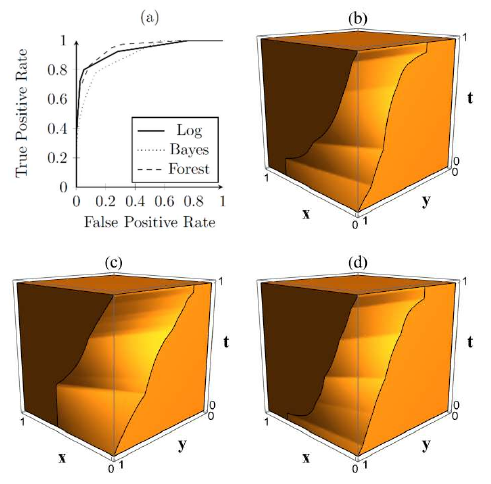}
\caption{(a) A plot of the ROC curves  for the BUSI dataset. (b) The VOROS for the Logistic Regression classifier. (c) The VOROS for the Bayes classifier. (d) The VOROS for the Random Forest classifier.}
\label{fig:BUSI curves}
\end{figure}

\section*{Application to a credit fraud dataset}

The previous two examples have both very good absolute performance of the classifiers considered, plus the class imbalance is not so extreme. The biggest strength of the VOROS, however, is when there is extreme imbalance in class membership and misclassification costs, which we demonstrate in an example credit card fraud dataset. We use a processed dataset derived from a set of  transactions made by credit cards in September 2013 by European cardholders over the course of two days. \cite{dal2015calibrating,dal2014learned,dal2017credit,dal2015adaptive,carcillo2018scarff,carcillo2018streaming,lebichot2020deep,carcillo2021combining,le2022reproducible,lebichot2021incremental} There are a total of 492 frauds (positive class)  out of 284,807 transactions, so the fraud class accounts for only about $.172\%$ of the transactions. The dataset is published with derived features which are  a set of principal components from PCA processing the original features of the dataset (since the original features could not be published for confidentiality concerns). 

We build logistic regression models: one using just the first principal component ($M_1$) and the other using just the second principal component ($M_2$) to model whether transactions were fraudulent. This results in the ROC curves shown in Figure~\ref{fig:fraud_curves}.

%pc1 data
\begin{filecontents}{pc1.dat}
    x   y
    0.0   0.0
    0.00019622362088115512   0.027837259100642397
    0.0002480562754535357   0.034261241970021415
    0.0008811551277304702   0.08779443254817987
    0.0010144390966308774   0.09850107066381156
    0.0015734912995186967   0.13704496788008566
    0.0030359126249537207   0.17130620985010706
    0.003198815253609774   0.1734475374732334
    0.0033913365420214737   0.17558886509635974
    0.009437245464642726   0.23768736616702354
    0.016349500185116623   0.30406852248394006
    0.02114402073306183   0.3426124197002141
    0.022928544983339502   0.3533190578158458
    0.025072195483154387   0.36616702355460384
    0.029907441688263606   0.39400428265524623
    0.03437245464642725   0.4132762312633833
    0.03541281007034432   0.41755888650963596
    0.04271380970011107   0.43897216274089934
    0.07455757126990004   0.5182012847965739
    0.08283968900407257   0.5374732334047109
    0.09637911884487227   0.5653104925053534
    0.11309514994446501   0.5845824411134903
    0.16092188078489447   0.6381156316916489
    0.1689559422436135   0.6466809421841542
    0.17346168085894112   0.6509635974304069
    0.21942984079970382   0.6873661670235546
    0.24468345057386154   0.702355460385439
    0.5801554979637171   0.892933618843683
    0.6463828211773417   0.9293361884368309
    0.6781118104405776   0.9464668094218416
    0.7255016660496113   0.961456102783726
    0.8543206219918549   0.9914346895074947
    0.8640096260644206   0.9935760171306209
    0.8850833024805628   0.9957173447537473
    0.9718178452425028   1.0
    1.0   1.0
\end{filecontents}
%pc2 data
\begin{filecontents}{pc2.dat}
    x   y
    0.0   0.0
    0.0   0.004282655246252677
    2.2213994816734543e-05   0.059957173447537475
    2.59163272861903e-05   0.06638115631691649
    3.332099222510181e-05   0.07494646680942184
    9.626064420584968e-05   0.08565310492505353
    0.00011477230655312847   0.08779443254817987
    0.000136986301369863   0.08993576017130621
    0.0012921140318400593   0.16488222698072805
    0.0014402073306182895   0.17130620985010706
    0.0023250647908182156   0.20556745182012848
    0.0030729359496482786   0.22483940042826553
    0.003972602739726027   0.24411134903640258
    0.004287300999629767   0.2505353319057816
    0.013676416142169566   0.43254817987152033
    0.01459829692706405   0.44539614561027835
    0.014924102184376157   0.44967880085653106
    0.016793780081451316   0.4732334047109208
    0.019922251018141428   0.5117773019271948
    0.021769714920399853   0.5331905781584583
    0.027086264346538318   0.5717344753747323
    0.03049981488337653   0.5931477516059958
    0.03483524620510922   0.6081370449678801
    0.03559052202887819   0.6102783725910065
    0.0410958904109589   0.6252676659528907
    0.050399851906701224   0.6488222698072805
    0.05212884116993706   0.6531049250535332
    0.05308034061458719   0.6552462526766595
    0.061821547574972234   0.6723768736616702
    0.10633839318770825   0.7430406852248393
    0.11062199185486857   0.7473233404710921
    0.16181044057756386   0.7901498929336188
    0.16493520918178453   0.7922912205567452
    0.20497593483894855   0.8094218415417559
    0.38456497593483896   0.867237687366167
    0.9964531654942614   1.0
    1.0   1.0
\end{filecontents}

\begin{filecontents}{pc_extra.dat}
x   y
0.38456497593483896   0.867237687366167
0.7255016660496113   0.961456102783726
\end{filecontents}

\begin{figure}
{\centering
\includegraphics[width=.30\textwidth]{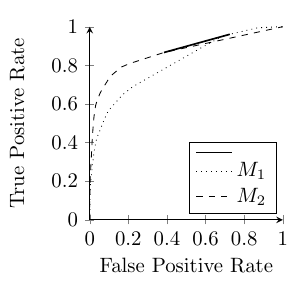}
\caption{A plot of the ROC curves for the Dal Pozzolo Credit Fraud dataset. The solid line connects the optimal points on each curve that have the same cost for a specific value of $t$.}
\label{fig:fraud_curves}
}
\end{figure}

Since these curves intersect, which curve performs better depends on the aggregate misclassification costs. Figure~\ref{fig:fraud_curves} shows the resulting ROC convex hulls together with a line showing the optimal points on each curve that have the same expected cost for a particular choice of aggregate class imbalance. In this case $t\approx 0.2165$, results in the these optimal points having the same normalized expected costs. This is the dividing threshold between $M_1$ performing better and $M_2$ performing better.

There is no guidance given in the dataset for the costs of a false positive versus a false negative, but if we assume that the cost of incorrectly flagging a true transaction as fraud is small (requiring more testing and verification, or customer annoyance to get a flagged card unfrozen) compared to the cost of missing fraud, then the misclassification cost of not detecting fraud could vary depending on both the number of fraudulent transactions and the dollar amount charged.  Suppose we want to choose between these two classifiers, $M_1$ and $M_2$, when we have only bounds on these values in the real world: for example, we assume  that between $0.1\%$ and $1\%$ of transactions are fraudulent, and that the cost of the false negatives is between $500$ and $5000$ times the cost of a false positive. Using the AUROC, we would pick $M_2$, but using the VOROS we find that $\dfrac{1-t}{t}=\dfrac{C_0|N|}{C_1|P|}=\dfrac{C_0}{C_1} \dfrac{|N|}{|P|}$ is the product of the cost and class ratios. Which satisfy the following bounds

$$99 \leq \dfrac{|N|}{|P|} \leq 999 \text{ and } \dfrac{1}{5000}\leq \dfrac{C_0}{C_1} \leq \dfrac{1}{500}$$

These bounds then translate into the following inequality

$$\dfrac{99}{500} \leq \dfrac{t}{1-t} \leq \dfrac{999}{5000}$$

Which since $t=\dfrac{\frac{t}{1-t}}{\frac{t}{1-t}+1}$, induces the limits

$$\dfrac{99}{599} \leq t \leq \dfrac{999}{5999}$$

\begin{table}
\begin{center}
\begin{tabular}{|c|c|c|}
\hline
     & $V([999/5999,99/399])$ & AUROC \\
     \hline
   $M_1$ & 90.1\% & $79.5\%$\\ %& 3.54\% \\
    \hline
    $M_2$ & 89.6 \% & $85.3\%$\\ %& 3.06\% \\
    \hline
    Base & 86.9\% & $50\%$\\ %& N/A \\ 
    \hline
\end{tabular}
\caption{Table summarizing the VOROS vs. AUROC in the chosen cost range for the Dal Pozzolo Credit Fraud data. Note we abbreviate VOROS to simply V due to space constrains. }
\label{tab:fraud_voros}
\end{center}
\end{table}

Taking the Volume over each ROC surface yields the results in Table~\ref{tab:fraud_voros}. This VOROS comparison immediately shows us facts about classifier performance in our estimated interval that cannot be obtained from comparing the AUROCs. First, we learn that the baseline classifier, that flags {\em all} transactions as fraud actually is not doing so badly in this range by comparing its VOROS (86.9\%) to the values for $M_1$ and $M_2$ (where the strength of a baseline classifier for problems with very rare or very costly classes is high, but is not immediately obvious from looking at ROC curves on their own. In this scenario, most ML papers shift to AUPRC or other measures of classifier performance as a result, so as to not recommend a classifier that is worse than a baseline). Second, comparing the AUROCs in Table~\ref{tab:fraud_voros}, one would expect classifier $M_2$ to dominate, but in this cost range, $M_1$ is  better, and since the   VOROS is larger in this region, for this range we should pick classifier $M_1$ instead of either $M_2$ or the baseline. This agrees with a cost analysis which shows that if the slope of the supporting line to the dotted and dashed curves in Figure~\ref{fig:fraud_curves} are between $\frac{99}{500}$ and $\frac{999}{5000}$, then the supporting line for the dotted curve is higher than that of the dashed curve.

\begin{figure}
{\centering
\includegraphics[width=.40\textwidth]{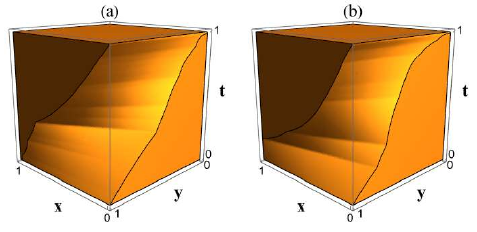}
\caption{The VOROS for logistic regression classifiers (a) using the first principal component, $M_1$, and (b) the second principal component $M_2$.} 
\label{fig:fraud_voros}
}
\end{figure}
\section{Discussion} 

We introduced the VOROS, a natural and easily-computable measure that generalizes the AUROC. We showed both theoretically and in example datasets that it has utility for comparing classifiers when classes or costs are unbalanced. We do note that an effective use of the VOROS is limited to domains where subject matter experts can put some bounds on expected misclassification costs.  So far, our work handles only binary classification tasks.  In fact, it is important to remark that the Volume {\em under} the ROC surface, or VUS, studied by \cite{ferri2003volume} and others \cite{he2008meaning, waegeman2008roc,kang2013estimation}  is an entirely different concept, generalizing the AUROC to  multiple classes, rather than generalizing binary classifiers to different cost and class imbalances.  Indeed,  a generalization our VOROS work to also handle  multiple classes, perhaps by increasing the dimension of the surface, remains a possible topic of future research, as is a generalization to incorporate varied misclassification costs by instance as in \cite{fawcett2006roc}. 

%%%% Lenore is here: maybe add some of the 
%%%% commented stuff back below if there is room

\section{Ethics Statement}  In many application domains (including breast cancer screening), there are ethical implications for how one measures the cost of a false positive versus a false negative.  Once the costs, including ethical costs, of misclassification have been considered by experts, we can analyze the results in our framework. 

\bibliography{references}

\end{document}